\newtheorem{theorem}{Theorem}
\newtheorem{proof}{Proof}
\title{
SieveNet: Selecting Point-Based Features for Mesh Networks
}
\author{
    Shengchao Yuan\textsuperscript{\rm 1}, 
    Yishun Dou\textsuperscript{\rm 2},
    Rui Shi\textsuperscript{\rm 1},
    Bingbing Ni\textsuperscript{\rm 1}\thanks{Bingbing Ni is the corresponding author.},
    Zhong Zheng\textsuperscript{\rm 2}\\
}
\begin{document}

\maketitle

\begin{abstract}
    Meshes are widely used in 3D computer vision and graphics, but their irregular topology poses challenges in applying them to existing neural network architectures. Recent advances in mesh neural networks turn to remeshing and push the boundary of pioneer methods that solely take the raw meshes as input. 
    Although the remeshing offers a regular topology that significantly facilitates the design of mesh network architectures, features extracted from such remeshed proxies may struggle to retain the underlying geometry faithfully, limiting the subsequent neural network's capacity. To address this issue, we propose SieveNet, a novel paradigm that takes into account both the regular topology and the exact geometry. Specifically, this method utilizes structured mesh topology from remeshing and accurate geometric information from distortion-aware point sampling on the surface of the original mesh. Furthermore, our method eliminates the need for hand-crafted feature engineering and can leverage off-the-shelf network architectures such as the vision transformer. Comprehensive experimental results on classification and segmentation tasks well demonstrate the effectiveness and superiority of our method.
  \end{abstract}

\section{Introduction}
Polygonal mesh is one of the most commonly used data structures in computer vision and graphics to represent 3D objects, which concisely describes the geometry and topology of the object through vertices, edges, and faces. It is a popular approach in various applications, such as modeling, rendering, and animation.
In contrast to other prevalent forms of 3D representations, meshes offer a concise and all-encompassing depiction of three-dimensional objects. As opposed to voxels, which capture volumetric data, meshes exclusively delineate the surface geometry of an object, thereby furnishing a more compact representation. Compared with point clouds, meshes allow more accurate and detailed modeling of complex shapes and features by defining the topology.

However, the flexible nature of the mesh representation results in the mesh data structure being unstable and irregular. The topology of a mesh is unstable, even for 3D objects with identical or similar geometries. For example, a rectangle with four vertices can be partitioned into two distinct sets of triangular faces by connecting different diagonals. Furthermore, the topology of meshes is irregular. The connectivity of the units in the meshes, such as vertices, edges, and faces, needs to be described by complex graphs, which are distinct from the situation in 2D pixel arrays. 
These inherent properties of meshes make it challenging to take advantage of mature network architectures designed for the image domain.

\begin{figure}[t]
  \centering
  \vspace{-20pt}
  \includegraphics[width=1.0\linewidth]{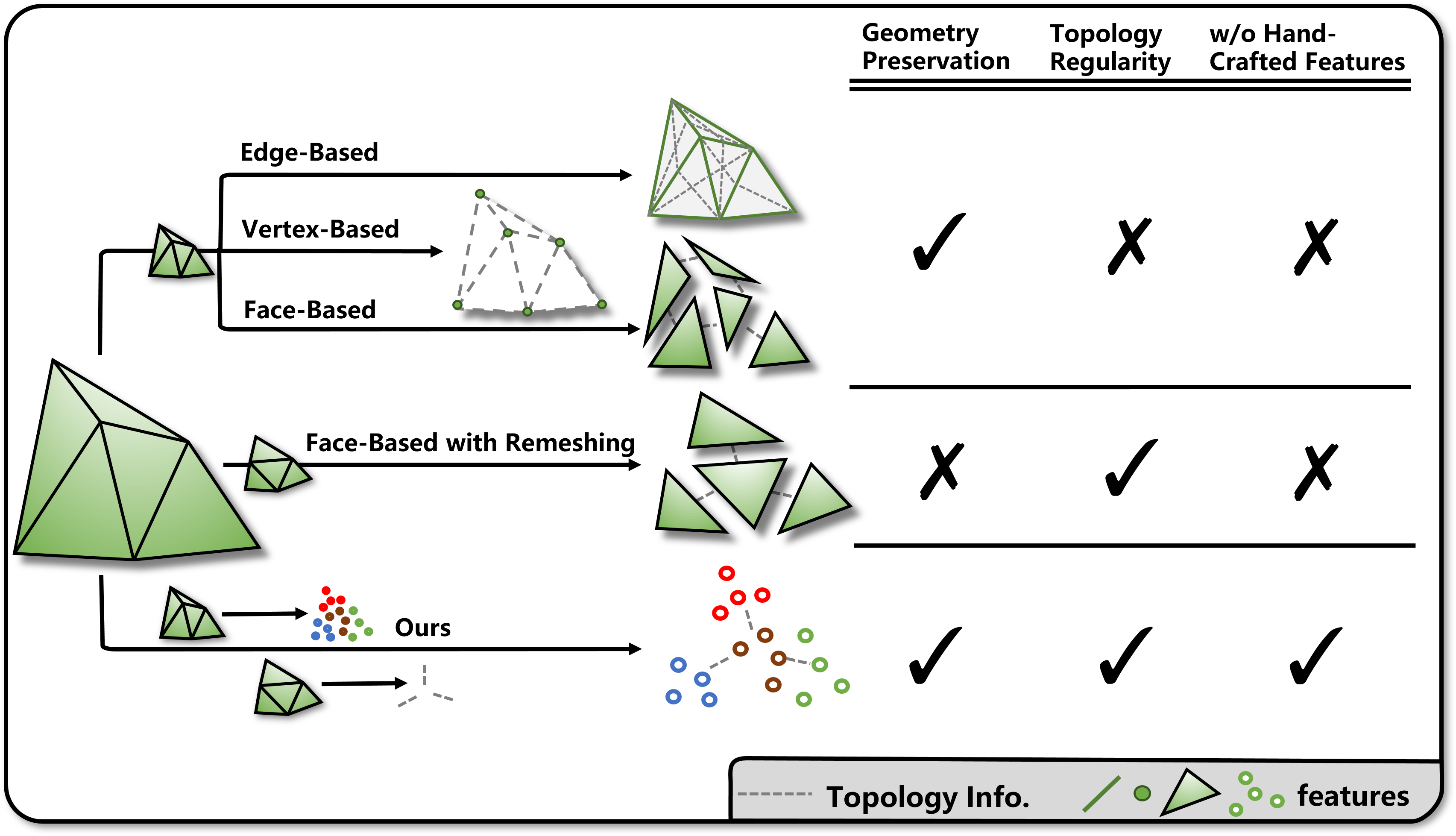}
  \caption{\label{fig:compare}Deep learning methods on meshes. Methods that directly take raw meshes as inputs (rows 1-3) are hampered by the irregular topology inherent in such meshes. Meanwhile, some remeshing methods (row 4) rely on geometrically imprecise proxies. To bypass these limitations, we unite the precise geometry from raw meshes with the regular topology afforded by remeshing. Moreover, by using point-based features described only by positions and normals, our method eliminates the need for hand-crafted descriptors.
  }
\end{figure}

In an effort to solve this problem, attempts have been made to extend the paradigms in 2D images to 3D meshes. Several pioneering methods have defined operations directly on unmodified mesh data. The classic MeshCNN~\cite{hanocka19meshcnn} defines convolution and pooling operations on the mesh edges and designs input descriptors related to angles and edge lengths on the half-edge structure. PD-MeshNet~\cite{milano20pdmeshnet} extends a primal-dual framework from the graph neural network to triangle meshes to define convolutions and introduces a pooling operation by dynamically collapsing graph edges using attention coefficients. MeshNet~\cite{feng19meshnet} defines operations on faces, noticing that faces are rich in spacial and structural features and have a regular connection relationship in two-manifold triangle meshes. Recently, methods based on remeshing have gradually gained attention. 
SubdivNet~\cite{hu2022subdivnet} finds that the subdivision surface has a topology similar to 2D pixels, and all 2-manifold triangular meshes can be remeshed into subdivision surfaces. Therefore, mesh operators can be designed by simply analogizing mesh patches to pixels.
MeshMAE~\cite{liang22meshmae} introduces SubdivNet's~\cite{hu2022subdivnet} scheme into the transformer model and pre-trains the network using the masked autoencoder~\cite{he22mae} technique.


\begin{figure}[t]
  \centering
  \hfill
  \subfigure[]{
        \includegraphics[width=0.45\linewidth]{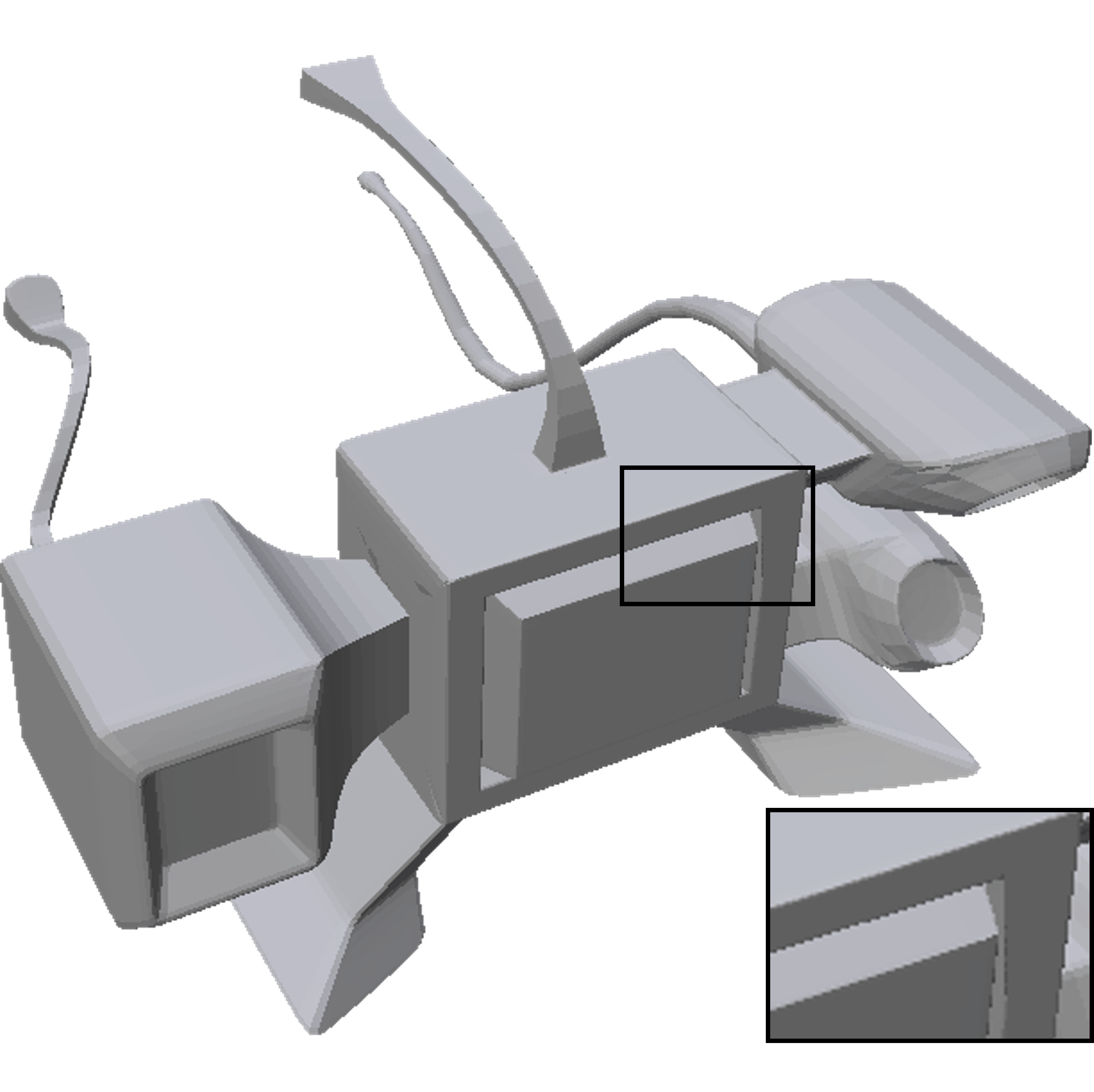}
        \label{fig:raw}
    }
  \hfill
  \subfigure[]{
        \includegraphics[width=0.45\linewidth]{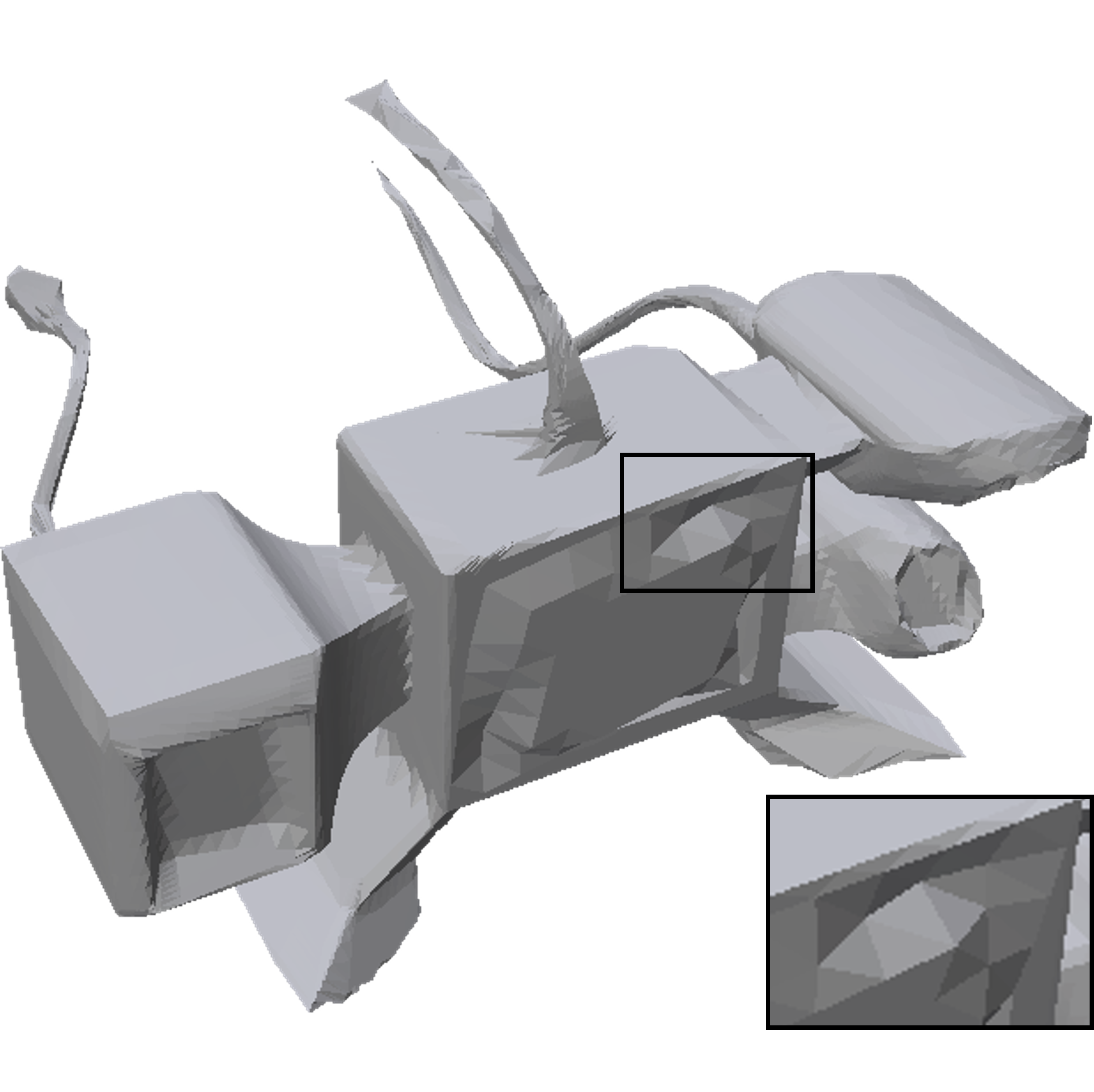}
        \label{fig:remesh}
      }
  \hfill  \caption{\label{fig:remeshdegrad}Remeshing leads to a perceivable degradation of geometric quality. (a) is the original mesh, and (b) is the remeshing result of MAPS~\cite{lee98maps}, the method used in state-of-the-art networks such as SubdivNet~\cite{hu2022subdivnet} and MeshMAE~\cite{liang22meshmae}. The example is from the COSEG-alien dataset~\cite{wang12coseg}.}
\end{figure}

These remesh-based methods introduce an organized structure to the mesh topology, while we notice that the remeshing process leads to perceptible degradation in mesh geometric quality, as shown in Fig.~\ref{fig:remeshdegrad}. A proxy mesh with such imprecision may not faithfully capture the geometric information on the original mesh, thus limiting the capacity of subsequent neural networks. 

To tackle this problem, we propose SieveNet, a novel paradigm that considers both the regular topology from remeshing and the exact geometry from the original mesh surface, as shown in Fig.~\ref{fig:compare}.
Given a 2-manifold triangle mesh, we first extract a regular topology by mesh simplification and mesh subdivision, where each face in the subdivided mesh only serves as a topology unit to pack the features from the original mesh. The regular topology units allow us to reuse the off-the-shelf models as remesh-based methods.
In the meantime, we trace the process to establish a point-to-point bijection between the original mesh and the subdivided mesh. With the help of the bijection, the point-based geometric features from the original mesh are projected and packed into the regular topology units. 
For this purpose, we design a scheme that tries to uniformly sample features from the original mesh while keeping the features in the topology units with the same size. This stage starts with stratified sampling candidate points on the subdivided mesh, followed by a distortion-aware selection to approximate the uniformity of the original mesh.
The point-based features can be fully described by positions and normals of the point, eliminating the need for hand-crafted face features (such as the face area, inner angles, curvatures, and the face center~\cite{feng19meshnet}).

In summary, our work makes the following contributions:
\begin{itemize}
\item We propose SieveNet, a novel scheme to combine the regular topology from the subdivided mesh and the faithful geometric features from the original mesh.
\item The scheme eliminates the need for hand-crafted features and can leverage off-the-shelf network architectures.
\item Experiments on several classification and segmentation datasets demonstrate the effectiveness of our method.
\end{itemize}

\section{Related Work}
\subsection{3D Geometry Networks}
In recent years, various 3D deep learning approaches have emerged. Voxel methods~\cite{Zhirong15CVPR, Maturana2015VoxNet} extend 2D CNNs to 3D by encoding geometry into dense 3D grids. However, their high memory overhead makes sparse 3D data representation inefficient. Subsequent works like octrees~\cite{wang17ocnn} explore hierarchical structures  to improve scalability. View-based methods~\cite{Su15mvcnn, sinha16geoim, feng18gvcnn} project 3D objects into 2D images to leverage image networks. However, they discard spatial 3D relationships and are sensitive to object pose.


\smallskip\noindent\textbf{Point Cloud-Based Methods.} Point cloud methods represent surfaces via unstructured point sets, where data points can be easily collected by sensors like LiDAR or synthesized from meshes. Architectures like PointNet~\cite{qi17pointnet} and PointNet++~\cite{qi17pointnetv2} enable deep CNNs to process point clouds in their unstructured format, and subsequent works focus on mechanisms to extract local structures~\cite{9010002, zhao2021point, 9577737}. Recently, PointNeXt~\cite{qian2022pointnext} introduces residual connections and separable MLPs into PointNet++, enabling  effective and efficient model scaling. Despite these advances, the lack of topological information in point clouds means that the inference of local structures relies on extra clustering methods.

\smallskip\noindent\textbf{Mesh-Based Methods.} Mesh-based approaches address these limitations and appear promising. These methods can be categorized into vertex-based~\cite{masci15gcnn, boscaini16acnn, sharp22diffnet}, edge-based~\cite{milano20pdmeshnet}, and face-based~\cite{feng19meshnet, liu20neusubdiv, liang22meshmae} approaches. MeshCNN~\cite{hanocka19meshcnn} operates on the edge of the raw meshes by analogizing convolution and pooling on the mesh topology.
SubdivNet~\cite{hu2022subdivnet} proposes a promising approach that
introduces remeshing for face-based methods, converting raw meshes to similar meshes with regular topology, simplifying the design of mesh networks. 
MeshMAE~\cite{liang22meshmae} follows the remeshing scheme and utilizes the masked autoencoder to pre-train a transformer.


\begin{figure*}[t]
    \centering
    \includegraphics[width=0.98\linewidth]{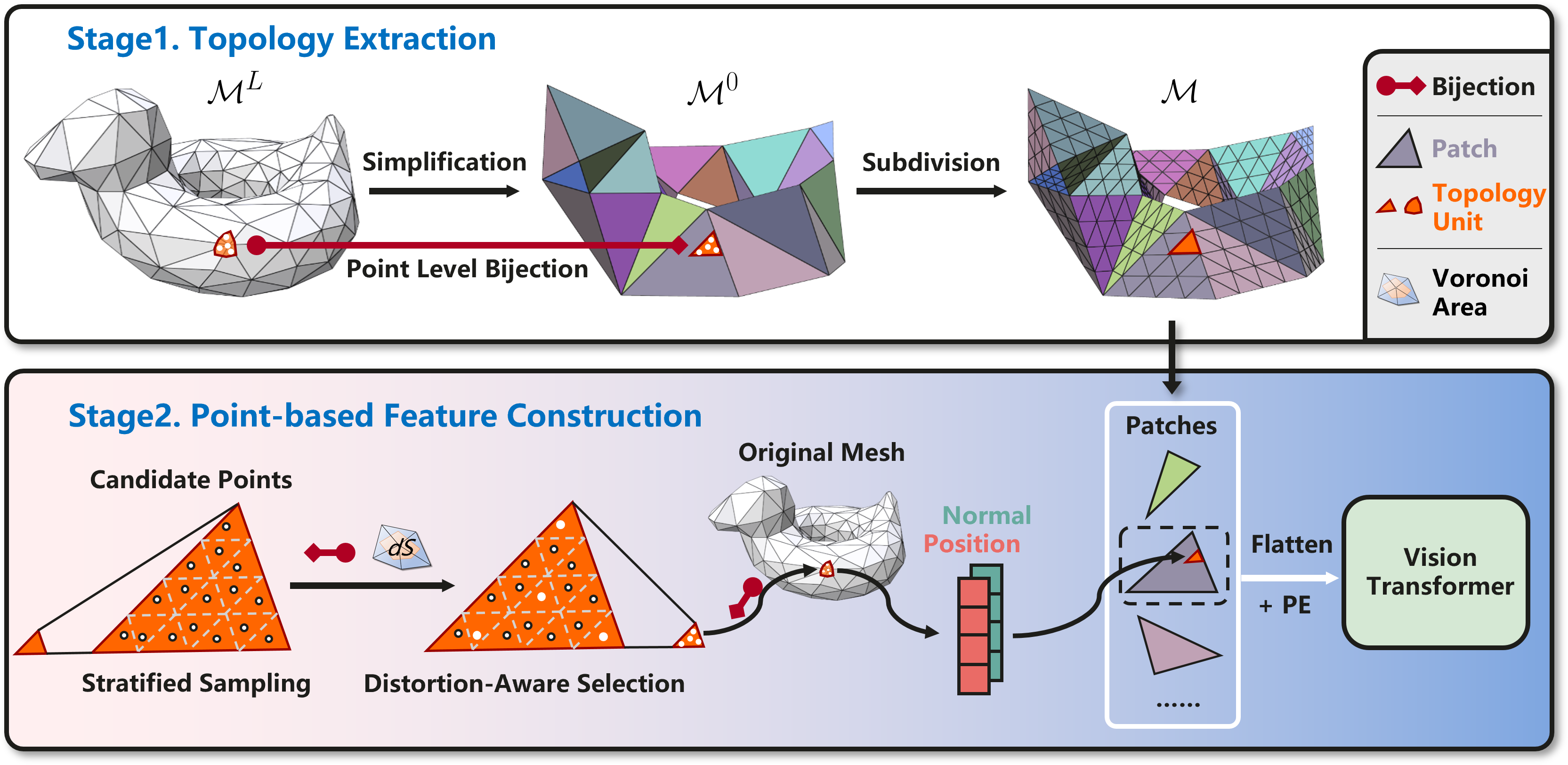}
    \caption{
    Illustration of our pipeline.
    Our method takes into account both the regular topology and 
    precise raw geometry. \quad
    Stage 1. Given a triangle mesh, simplification and subdivision 
    are conducted to extract a regular and fine-grained topology. 
    In the meantime, a point-level bijection is established. 
    Stage 2. 
    We start with stratified sampling candidate points on the subdivided mesh $\mathcal{M}$, 
    followed by a distortion-aware selection such that the selected points 
    seem \textit{uniformly} sampled from the original mesh $\mathcal{M}^L$ within each topology unit. 
    The positions and normals of selected points are packed together to represent a 
    topology unit, which is then further packed and flattened to form a patch representation. 
    With an off-the-shelf vision transformer, our method can achieve state-of-the-art results.}
    \label{fig:pipeline}
  \end{figure*}

\subsection{Simplification and Subdivision}
\noindent\textbf{Simplification.} 
Mesh simplification aims to reduce mesh faces while preserving visual appearance. Traditional methods include vertex decimation~\cite{rossignac1993multi, schroeder1992decimation} and edge collapse~\cite{garland97qem, 10.1145/3209661}, each minimizing heuristic cost or energy. Recent learnable techniques~\cite{9878835} claim real-time mesh simplification. Our SieveNet employs the classic edge collapse method QEM~\cite{garland97qem} together with a subsequent subdivision to build a regular topology.

\smallskip\noindent\textbf{Subdivision.} Subdivision refines coarse meshes into smooth surfaces through recursive refinements. 
Classic schemes include interpolating~\cite{catmull1978recursively, doo1978subdivision, loop87loopsubdiv, zorin1996interpolating, kobbelt2000sqrt} and approximating~\cite{dyn90butterfly, kobbelt96quad}.
Neural network methods~\cite{10.1145/3506694, 10.1145/3386569.3392418, 10.1145/3588432.3591531} enhance subdivision schemes for richer geometry.
Notably, since the remeshed proxy in our approach only provides a regular topology, we ignore vertex updates and only subdivide triangles in a 1-to-4 manner like Loop~\shortcite{loop87loopsubdiv}.

\section{Methodology}
  Motivated by the limitations in remeshing, we aim to design a neural mesh network which can benefit from a regular topology and take the original geometric features as inputs. Figure~\ref{fig:pipeline} shows a visual overview of our method. Given a triangle mesh, we first extract a structured topology through mesh simplification and mesh subdivision. 
  Meanwhile, a point-to-point bijection between the original mesh 
  and the subdivided mesh is established (Sec.~\ref{sec:32TopExtr}). 
  After that, we prepare point-based features by sampling points on each topology unit while extracting point features on the original mesh surface, where the point features are packed together to represent a topology unit, which is further packed to form a patch representation (Sec.~\ref{sec:33GeoSamp}).
  
  Utilizing the structured topology instead of an irregular one in the original mesh simplifies the data structure so that we can organize mesh data in a simple way analogous to 2D images. This allows us to use off-the-shelf network architectures, detailed in Sec.~\ref{sec:network_arch}. Moreover, the geometric features used in our network are point-based, which can be completely described by position and normal, avoiding the need for hand-crafted face features. 
  
  
  \subsection{Topology Extraction}
  \label{sec:32TopExtr}
  \noindent\textbf{Mesh Simplification.}
  Denoting the input mesh as $\mathcal{M}^L$, which is iteratively simplified to 
  $\mathcal{M}^\ell$ $(\ell=L-1, L-2,...,0)$ 
  using $f_{\ell, \ell-1}: \mathcal{M}^{\ell}\rightarrow \mathcal{M}^{\ell-1}$. 
  After $L$ iterations, we obtain the coarsest mesh $\mathcal{M}^0$: 
   \begin{equation}
       \mathcal{M}^0 = (f_{1, 0} \circ\cdots\circ f_{L, L-1})(\mathcal{M}^L).
       \label{eq:meshSim}
   \end{equation}
  The implementation of the mesh simplification in our SieveNet follows
  QEM~\cite{garland97qem}, which iteratively collapses an edge to minimize a geometric error. 
  Specifically, in each iteration, an edge $\boldsymbol{e} = (\boldsymbol{v}_1, \boldsymbol{v}_2)$ is removed, where the endpoints $\boldsymbol{v}_1, \boldsymbol{v}_2$ are merged into a new vertex at the optimal position to minimize the local quadratic error, and the remaining edges connecting to 
  $\boldsymbol{v}_1$ and $\boldsymbol{v}_2$ are connected to $\boldsymbol{v}$. 
  
  The coarse triangles of $\mathcal{M}^0$ are considered as patches for vision transformer, each of which comprises a set of sub-triangles with regular topology, achieved by recursive triangle subdivisions. 
  
  \smallskip\noindent\textbf{Subdivision.}
  We use Loop~\shortcite{loop87loopsubdiv} subdivision without vertex update to obtain a mesh $\mathcal{M}$ with finer topology from the simplified mesh $\mathcal{M}^0$. 
  See Fig.~\ref{fig:subdiv}, the Loop scheme uses a 1-to-4 division to split a triangle face recursively. Given the times of subdivisions, any triangle face is subdivided in the same way. Therefore, the topology between the subdivided faces can be directly expressed in the index relationship. We term the split sub-triangle the topology unit, as it serves as the basic feature unit arranged regularly within a patch. 
  \begin{figure}[htb]
    \centering
    \includegraphics[width=0.7\linewidth]{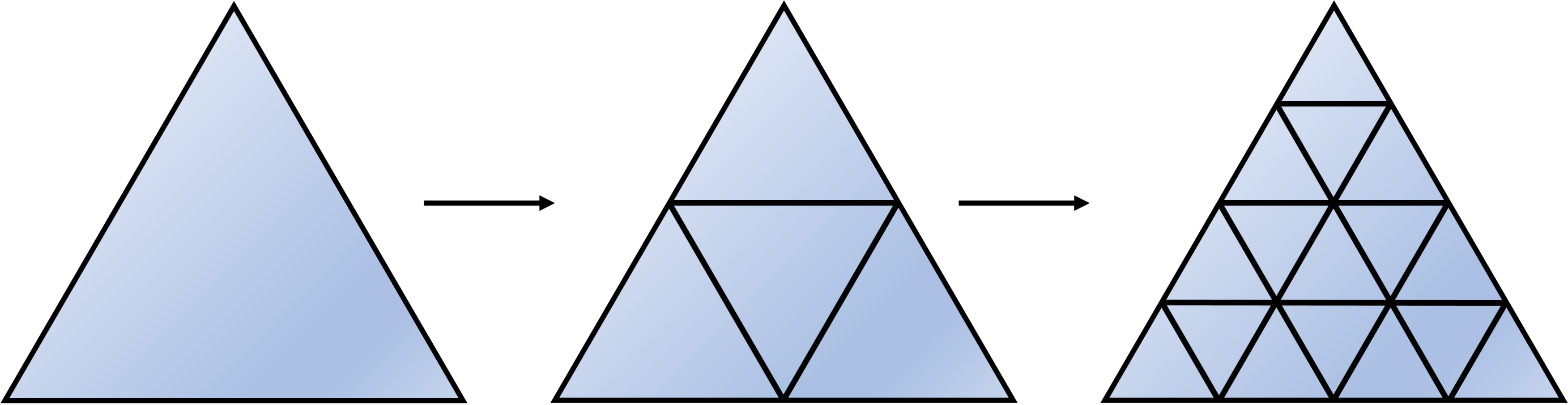}
    \caption{
    Loop subdivision~\cite{loop87loopsubdiv} recursively 
    splits the triangle in a 1-to-4 manner.}
    \label{fig:subdiv}
  \end{figure}
  
  \noindent\textbf{Point Level Bijection.}
  For each iteration of simplification $f_{\ell, \ell-1}$, 
  a bijective mapping $\Psi_{\ell, \ell-1}$ is established between points in $\mathcal{M}^\ell$ and $\mathcal{M}^{\ell-1}$, 
  as illustrated in Fig.~\ref{fig:topextr}. 
  The \textit{bijective} property of $\Psi_{\ell, \ell-1}$ means that 
  $\forall \boldsymbol{p} \in \mathbb{R}^3$ on $\mathcal{M}^\ell$, 
  $\exists \boldsymbol{p}' \in \mathbb{R}^3$ 
  on $\mathcal{M}^{\ell-1}$ such that 
  $\Psi_{\ell, \ell-1}(\boldsymbol{p})=\boldsymbol{p}'$, and vice versa.
  
  Assuming an edge $\boldsymbol{e}=(\boldsymbol{v}_1, \boldsymbol{v}_2)$ is chosen for the collapse. 
  It can be found that the geometry is only affected in the 1-ring region of edge $\boldsymbol{e}$, 
  that is, the region enclosed by the neighbors of the two endpoints $\boldsymbol{v}_1$ and $\boldsymbol{v}_2$. 
  We denote the regions before and after the simplification 
  as $\mathcal{R}^\ell \subset \mathcal{M}^\ell$ and $\mathcal{R}^{\ell-1}\subset \mathcal{M}^{\ell-1}$, 
  respectively. 
  
  \begin{figure}[t]
    \centering
    \includegraphics[width=1.\linewidth]{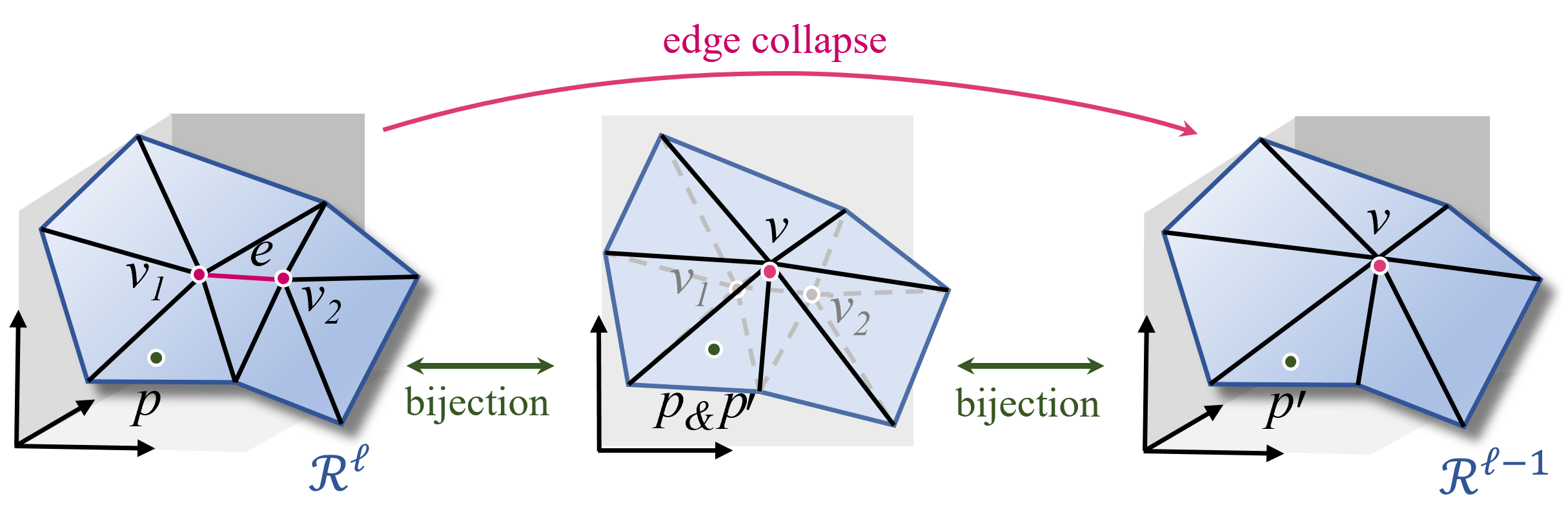}
    \caption{Point level bijection. The 1-ring region $\mathcal{R}^\ell$ (left) of edge $e$ is deformed to $\mathcal{R}^{\ell-1}$ (right) after the edge collapse. 
    By flattening $\mathcal{R}^\ell$ and $\mathcal{R}^{\ell-1}$ into the same 2D UV domain (middle), we can establish a bijection between arbitrary point pair 
    $({p}, {p}')$ in $\mathcal{R}^\ell$ and $\mathcal{R}^{\ell-1}$.}
    \label{fig:topextr}
  \end{figure}
  
  In order to construct a bijection $\Psi_{\ell, \ell-1}$ between $\mathcal{R}^\ell$ and $\mathcal{R}^{\ell-1}$, 
  we first need to find a conformal mapping $\psi$ that maps $\mathcal{R}^\ell$ and $\mathcal{R}^{\ell-1}$ into 
  the same domain, 
  and then construct a bijection between $\mathcal{R}^\ell$ and $\mathcal{R}^{\ell-1}$:
  \begin{align}
    \psi_{\mathcal{R}^\ell}(\boldsymbol{p}) & = \psi_{\mathcal{R}^{\ell-1}}(\boldsymbol{p}'), \\
     \Rightarrow  \ \  \Psi_{\ell, \ell-1}(\boldsymbol{p}) & = \boldsymbol{p}' = \psi_{\mathcal{R}^{\ell-1}}^{-1}(\psi_{\mathcal{R}^\ell}(\boldsymbol{p})),
    \label{eq:bijection}
  \end{align}
  where $\boldsymbol{p}$ is a point in $\mathcal{R}^\ell$ 
  and $\boldsymbol{p}'$ is the corresponding point in $\mathcal{R}^{\ell-1}$. 
  Following Liu et al.~\shortcite{liu20neusubdiv}, 
  the conformal mapping $\psi$ flattens $\mathcal{R}^\ell$ and $\mathcal{R}^{\ell-1}$ 
  into the same UV domain in $\mathbb{R}^2$. 
  For the rest of the meshes $\mathcal{M}^{\ell} \backslash \mathcal{R}^\ell$ 
  and $\mathcal{M}^{\ell-1} \backslash \mathcal{R}^{\ell-1}$, 
  the mapping 
  is identical 
  since they are not modified during the simplification $f_{l, l-1}$ 
  \footnote{$A \backslash B$ stands for subtracting set $B$ from set $A$.}.
  
  
  By iteratively applying the bijections on the mesh sequence 
  $\mathcal{M}^L, \cdots, \mathcal{M}^0$, 
  we can obtain the point 
  $\boldsymbol{p}^0$ for any point $\boldsymbol{p}^L$ on $\mathcal{M}^L$ by:
  \begin{align}
    \boldsymbol{p}^0 = \Psi(\boldsymbol{p}^L), \ \text{where} \ \Psi = (\Psi_{1, 0} \circ\cdots\circ \Psi_{L, L-1}).
  \end{align}
  The composite function $\Psi$ is also bijective, 
  hence the reverse point mapping is also supported, i.e. we have $\boldsymbol{p}^L = \Psi^{-1}(\boldsymbol{p}^0)$ 
  for any point $\boldsymbol{p}^0$ on $\mathcal{M}^0$.
  Moreover, the bijection between original mesh $\mathcal{M}^L$ and subdivided mesh $\mathcal{M}$ 
  is exactly the $\Psi$ since the subdivision does not modify the simplified geometry. 
  
  \subsection{Point-Based Feature Construction}
  \label{sec:33GeoSamp} 

\begin{figure}[!b]
  \centering
  \subfigure[]{
        \includegraphics[width=0.3\linewidth]{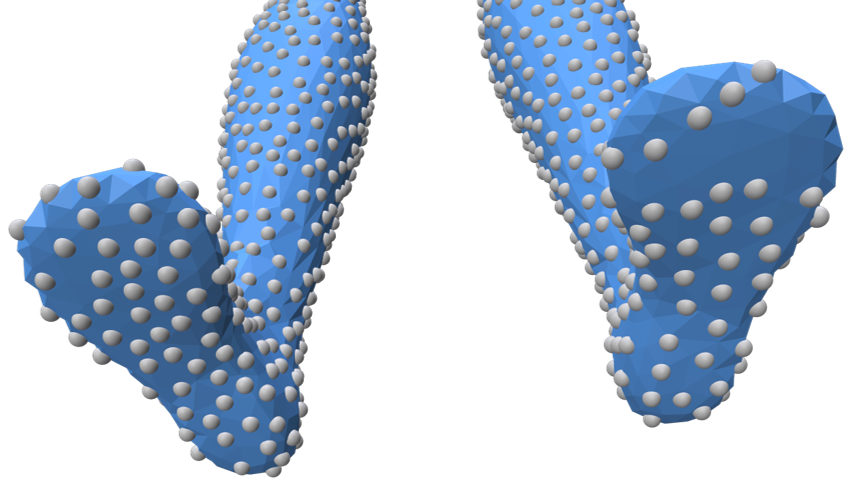}
        \label{fig:sc}
    }
  \subfigure[]{
        \includegraphics[width=0.3\linewidth]{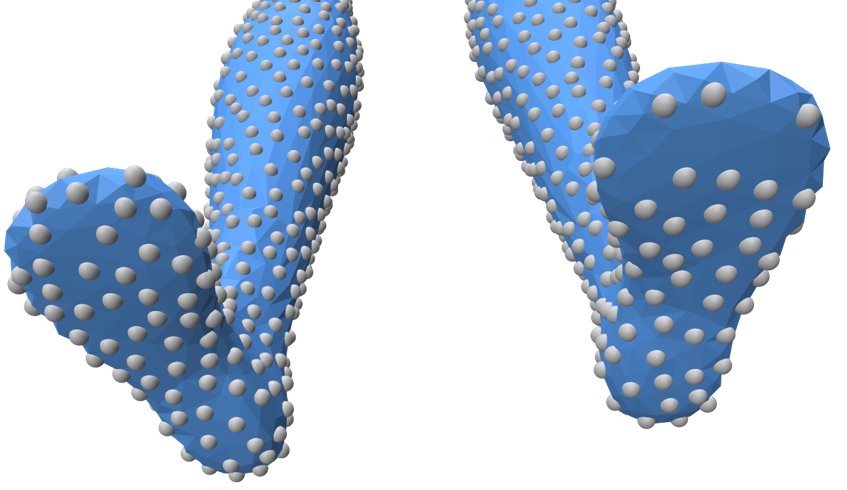}
        \label{fig:ss}
      }
  \subfigure[]{
        \includegraphics[width=0.3\linewidth]{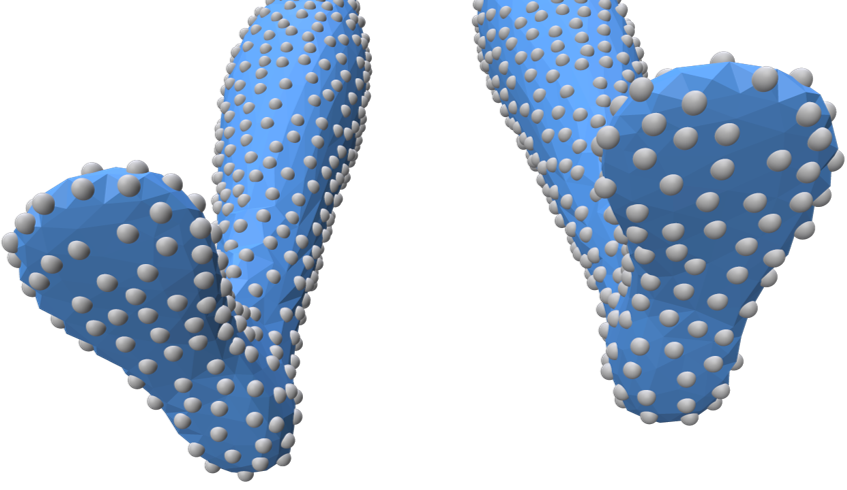}
        \label{fig:so}
      }
  \caption{Comparison among different sampling strategies, where the points are first sampled on $\mathcal{M}$ and then mapped to $\mathcal{M}^L$. (a) Barycenters of topology units. (b) Stratified sampling. (c) Distortion-aware sampling. Our strategy considers the relationship between $\mathcal{M}$ and $\mathcal{M}^L$, which results in a more uniform distribution. The example is from the HumanBody dataset~\cite{maron17humanbody}.
  } \label{fig:sampling}
\end{figure}

  Previous remesh-based methods such as SubdivNet~\cite{hu2022subdivnet} 
  and MeshMAE~\cite{liang22meshmae} typically rely on complicated descriptors, 
  including the face area, interior angles, and curvature. We argue that many of those are suboptimally hand-crafted and may 
  impede the representation learning in the subsequent neural network. 
  Moreover, there exist non-negligible geometric deviations since 
  these descriptors are built upon remeshed meshes rather than 
  the true input meshes. 
  
  In contrast, we use more simple descriptors, whereas they faithfully preserve the original geometric details achieved by the proposed point features. We approach it by first performing stratified sampling of candidate points on the subdivided mesh $\mathcal{M}$, followed by a distortion-aware selection such that the selected points approximate a \textit{uniform} distribution in the original mesh $\mathcal{M}^L$ within each topology unit. 
  

  \smallskip\noindent\textbf{Stratified Sampling.}
  For each topology unit in $\mathcal{M}$, 
  we divide it into smaller areas using recursive 1-to-4 subdivisions. 
  Note that this step is only for stratified sampling 
  rather than building topology, like in the previous section. 
  The candidate points are sampled by uniformly sampling one point within each smaller area. 
  
  We adopt the barycentric coordinate system 
  to ease the later distortion-aware selection. 
  A candidate point $\boldsymbol{p} \in \mathbb{R}^3$ can be denoted by the interpolation of the 
  triangle vertex ($\boldsymbol{v}_a, \boldsymbol{v}_b, \boldsymbol{v}_c$):
  \begin{equation}
    \boldsymbol{p} = \alpha \boldsymbol{v}_a + \beta \boldsymbol{v}_b + \gamma \boldsymbol{v}_c,
  \end{equation}
  where $\alpha, \beta, \gamma > 0$ and $\alpha + \beta + \gamma = 1$. 
  Then we can denote $\boldsymbol{p}_{bary} = (\alpha, \beta, \gamma)$ as the barycentric coordinate of point $\boldsymbol{p}$. 

  \smallskip\noindent\textbf{Distortion-Aware Selection.} 
  Incorporating a remeshed regular topology poses challenges for preserving the original geometry. The sampled points on $\mathcal{M}$ may not cover the original surface well, as shown in Fig.~\ref{fig:sampling}. To this end, we propose a novel point selection technique such that the selected points seem \textit{uniformly} sampled from the original mesh $\mathcal{M}^L$ within each topology unit. Note that uniform point sampling is a common practice in 
  point cloud neural networks~\cite{qi17pointnetv2, qian2022pointnext}.
  
  This selection is essentially computing the probability density function (PDF) 
  of points on $\mathcal{M}$ 
  given a uniform distribution on $\mathcal{M}^L$ and a mapping function. 
  Recall that $\Psi$ is a point level mapping from $\mathcal{M}^L$ to $\mathcal{M}$, 
  we have $\boldsymbol{p} = \Psi(\boldsymbol{p}^L)$, 
  and we can compute the PDF of points as 
  $\text{PDF}(\boldsymbol{p}) = \text{PDF}(\boldsymbol{p}^L) 
  |\det({\partial \boldsymbol{p}^L} / {\partial \boldsymbol{p}})|$. 
  Theoretically, given a PDF, we can apply acceptance-rejection sampling to get 
  accurate samples. 
  However, it's expensive in this case since it involves solving 
  $|\det({\partial \boldsymbol{p}^L} / {\partial \boldsymbol{p}})|$ iteratively. 

  To this end, we propose an alternative numerical method to approach it 
  based on the geometric interpretation of the mapping. 
  We employ $d S^L$ to denote the area element at $\boldsymbol{p}^L$ on $\mathcal{M}^L$. 
  Similarly, $d S$ is the area element at $\boldsymbol{p}$ on $\mathcal{M}$. 
  Then we have 
  $\text{PDF}(\boldsymbol{p}) d S = \text{PDF}(\boldsymbol{p}^L) d S^L$.  
  By replacing area elements with finite areas $\Delta S^L$ and $\Delta S$, 
  we get 
  $|\det({\partial \boldsymbol{p}^L} / {\partial \boldsymbol{p}})| \approx {\Delta S^L} / {\Delta S}$. 
  
  In practice, those finite areas are represented by the Voronoi area~\cite{meyer2003discrete}, which is efficient in computation and provides a good approximation as an area element. 
  Since the Voronoi area has no definition for points sampled on triangle faces, we interpolate the Voronoi area of triangle vertices on the candidate points using the barycentric coordinate. 
  Specifically, for a candidate point $\boldsymbol{p}$ on the mesh $\mathcal{M}$, 
  the determinant is approximated by:
  \begin{align}
    |\det({\partial \boldsymbol{p}^L} / {\partial \boldsymbol{p}})| 
    &\approx \frac{\Delta S^L}{\Delta S}  \nonumber \\ 
    &\approx \frac{{\boldsymbol{p}^L_{bary}}\cdot(\Delta S^L_1, \Delta S^L_2, \Delta S^L_3)}{{\boldsymbol{p}_{bary}}\cdot(\Delta S_1, \Delta S_2, \Delta S_3)},
  \end{align}
  where $\boldsymbol{p}_{bary}$ is the  barycentric coordinate of $\boldsymbol{p}$, 
  $\Delta S_1, \Delta S_2, \Delta S_3$ are the Voronoi areas of the triangle vertices 
  (calculated from $\mathcal{M}$). 
  The corresponding point, barycentric coordinate, 
  and Voronoi areas of vertices on the original mesh $\mathcal{M}^L$ 
  are $\boldsymbol{p}^L$, $\boldsymbol{p}^L_{bary}$ and $\Delta S^L_1, \Delta S^L_2, \Delta S^L_3$, 
  which can be calculated in a similar way. 
  
  Lastly, we weight the candidates by assigning a uniform random value 
  $w \sim U[0, |\det({\partial \boldsymbol{p}^L} / {\partial \boldsymbol{p}})|]$ 
  to each point. 
  Then the top-$k$ points are selected, where $k$ is the same for all patches. 
  The positions and normals on the original mesh $\mathcal{M}^L$ 
  of these points are packed together to represent a topology unit, 
  which is further packed to form a patch representation. 
  
  
  

\subsection{Neural Network}
\label{sec:network_arch}
We employ the standard transformer architecture~\cite{vaswani17transformer,dosovitskiy21vit}, 
which stacks multi-headed self-attention (MSA) and feed-forward network (FFN) 
to construct the neural network. 
Following MeshMAE~\cite{liang22meshmae}, we use a linear classifier for classification and train two linear heads for the segmentation task to aggregate global and local features.

\smallskip\noindent\textbf{Token and Position Embedding.} 
A patch contains 64 topology units. 
The number of selected points within each topology unit varies according 
to the specific task, detailed in experiments. 
The position and normal (on $\mathcal{M}^L$) of selected points within a patch are concatenated together, 
which are then projected into a 768-dimensional token through one linear layer. 
As for the position embedding, we employ a two-layer MLP to 
transform the average position (on $\mathcal{M}^L$) 
of selected points within a patch into a position embedding, following MeshMAE~\cite{liang22meshmae}.
Token and position embedding are added and then fed into a vanilla vision transformer. 


\section{Experiments} 
\subsection{Implementation Details}
\noindent\textbf{Mesh Processing.}
We first check and repair the meshes in the datasets to be watertight and 2-manifold so that there are no holes or boundaries on the mesh surfaces. After that, we apply the simplification described in Sec.~\ref{sec:32TopExtr} to obtain coarse meshes. Depending on the dataset, we set the number of faces in the coarse meshes between 96 and 256. 
For a small subset (such as less than 1\% for Manifold40~\cite{hu2022subdivnet}) that cannot be reduced to the given number of faces, we directly discard these meshes. 
The coarse mesh is subdivided for 3 iterations. 
Following SubdivNet~\cite{hu2022subdivnet}, we generate 10 different mesh variants in the simplification to enhance the robustness of the network.

\noindent\textbf{Data Augmentation.} 
As the datasets are small, we generate multiple inputs for each original mesh by adding randomness in simplification, we obtain different bijections between the same original mesh and the subdivided mesh (we only refer to its topology).
During training, we first resize the original mesh to a unit box, and then we apply random anisotropic scaling with a normal distribution with mean $\mu=1$ and variance $\sigma=0.1$, following MeshCNN~\cite{hanocka19meshcnn}. We further truncate the scaling factor at $3\sigma$ to avoid extreme values.
For the classification task, like MeshMAE~\cite{liang22meshmae}, we apply free-form deformation (FFD) to introduce simple deformations of rigid objects. For the HumanBody dataset~\cite{maron17humanbody}, as SubdivNet~\cite{hu2022subdivnet}, we randomly rotate the input mesh around the coordinate axes with Euler angles of $0, \pi/2, \pi,$ and $3\pi/2$ as meshes in the dataset have different orientations.

\noindent\textbf{Training Settings.} We employ ViT-Base~\cite{dosovitskiy21vit} as the backbone network in our experiments. The input channels are modified to adapt to our data. The transformer has 12  blocks with a token dimension of $768$. 

We use the AdamW~\cite{LoshchilovH19adamW} optimizer with an initial learning rate of $10^{-4}$. For classification, we train for 100 epochs and decayed the learning rate by a factor of 0.1 at 30 and 60 epochs. For segmentation, we train for 200 epochs and decay the learning rate by a factor of 0.1 at 80 and 160 epochs. 

\subsection{Classification} 
\begin{table}[!b]
      \centering
      \begin{tabular}{>{\centering\arraybackslash}p{0.33\textwidth}>{\centering\arraybackslash}p{0.09\textwidth}}
        \toprule
        Method & Acc.    \\
        \midrule
        PointNet++~\cite{qi17pointnetv2} & 87.9\% \\
        PointNeXt-S~\cite{qian2022pointnext}&  92.3\% \\
        \midrule
        MeshNet~\cite{feng19meshnet} & 88.4\% \\
        SubdivNet~\cite{hu2022subdivnet} & 91.2\% \\
        MeshMAE~\cite{liang22meshmae} & 91.7\%\\
        \midrule
        Ours     & \textbf{92.5\%}  \\
        \bottomrule
      \end{tabular}
\caption{Classification results on Manifold40~\cite{hu2022subdivnet}. The first two methods take point clouds as input. Other methods are mesh-based methods.}
      \label{tab:classification}
\end{table}
We adopt the Manifold40 dataset~\cite{hu2022subdivnet} for the classification task. The Manifold40 dataset is 
converted from ModelNet40~\cite{wu15modelnet} to guarantee that meshes are watertight and 2-manifold. It contains 12311 shapes in 40 categories, such as airplanes, cars, and plants. We specify the minimum face number as 96 and the maximum face number as 256 during mesh simplification. Less than 1\% of shapes are discarded as they cannot reach the given face numbers. During training and testing, we set the patch number as 256 and fill the empty patches with zeros.
For the classification task, we select one point from the 64 sampled candidate points within each topology unit.

To demonstrate the capabilities of our method, we compared it with several mesh-based and point cloud-based methods. The results are shown in Table~\ref{tab:classification}. For a fair comparison, the results of point cloud methods are also evaluated on the Manifold40 dataset~\cite{hu2022subdivnet}. This dataset is more challenging than ModelNet40~\cite{wu15modelnet}, resulting in decreased accuracy.

\subsection{Segmentation}
\begin{figure}[!tb]
  \hspace{12pt}
  \stackunder[5pt]{
        \includegraphics[width=0.18\linewidth]{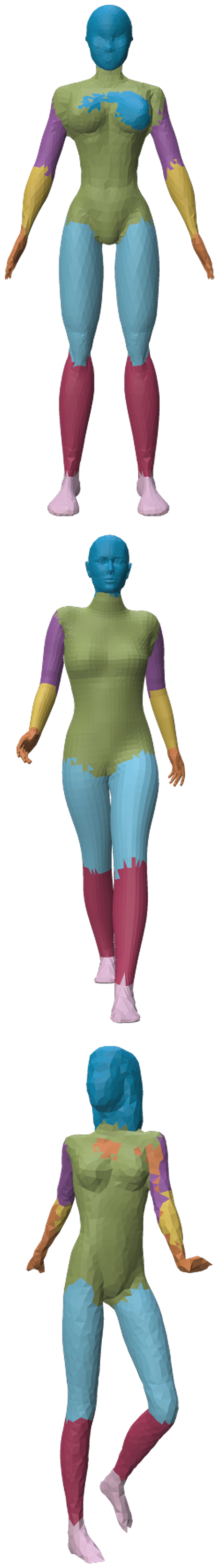}
      }{SubdivNet}
  \stackunder[5pt]{
        \includegraphics[width=0.18\linewidth]{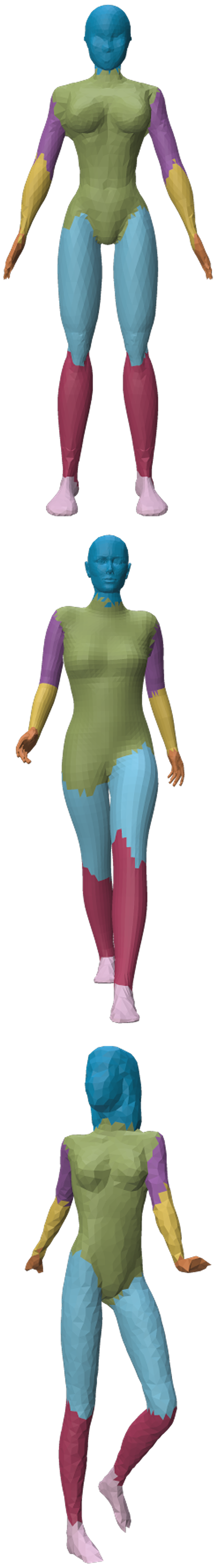}
      }{MeshMAE}
  \stackunder[5pt]{
        \includegraphics[width=0.18\linewidth]{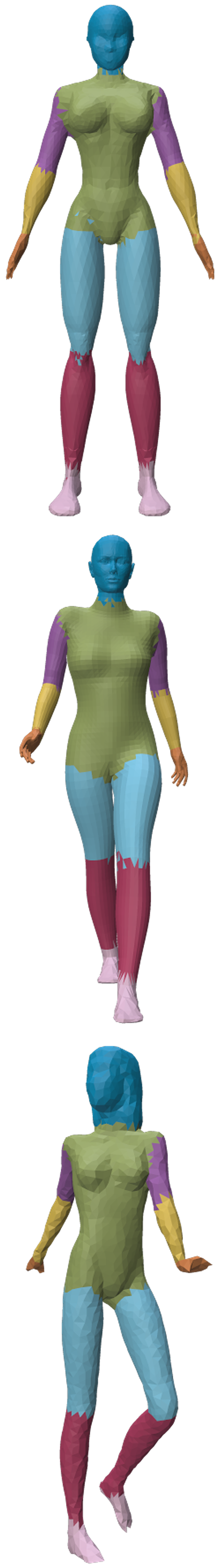}
      }{Ours}
  \stackunder[5pt]{
        \includegraphics[width=0.18\linewidth]{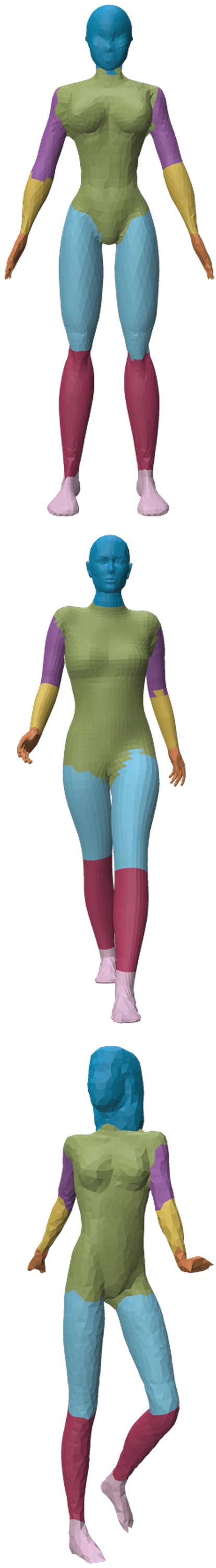}
      }{Ground-Truth}
  \caption{\label{fig:seg_result}Segmentation results on the HumanBody
dataset~\cite{maron17humanbody}. Segmentation parts are distinguished by different colors. Our method gives more accurate results.}
\end{figure}

For the segmentation task, our neural mesh network predicts labels for each topology unit on the processed input (simplified and then subdivided) because of the utilization of the regular topology. Consequently, we need to give appropriate labels for topology units and map the predicted labels back to the original mesh. 
The mapping between the original mesh and the input one is built through the point-to-point bijection mentioned in Sec.~\ref{sec:32TopExtr}. 
The label of each topology unit is voted by the labels of selected points, while original mesh segmentation is evaluated by labeling its faces according to their centroids' topology unit.
We adopt two datasets, HumanBody~\cite{maron17humanbody} and COSEG-aliens~\cite{wang12coseg}, for the segmentation task.


\begin{table}[t]
          \centering
          \begin{tabular}{{p{0.025\textwidth}>{\centering\arraybackslash}p{0.28\textwidth}>{\centering\arraybackslash}p{0.09\textwidth}}}
          \addlinespace[3pt]
            \clineB{2-3}{2}
            \addlinespace[3pt]
            &Method     & Acc.    \\
            \addlinespace[3pt]
            \cline{2-3}
            \addlinespace[6pt]
            \ldelim\{{11}{*}[$\dagger$]
            &GCNN~\cite{masci15gcnn} & 85.4\%\\
            &PointNet~\cite{qi17pointnet} & 74.7\%\\
            &PointNet++~\cite{qi17pointnetv2} & 82.3\%\\
            &MeshCNN~\cite{hanocka19meshcnn} & 87.8\%\\
            &DGCNN~\cite{wang19dgcnn} & 87.8\%\\
            &CGConv~\cite{yang21cgconv} & 89.9\%\\
            &PD-MeshNet~\cite{milano20pdmeshnet} & 86.9\%\\
            &DiffusionNet~\cite{sharp22diffnet} & 91.7\%\\
            &SubdivNet~\cite{hu2022subdivnet} & 93.0\%\\
            &MeshMAE~\cite{liang22meshmae} & 90.0\%\\
            &Ours & \textbf{93.2\%}\\
            \addlinespace[3pt]
            \cline{2-3}
            \addlinespace[6pt]
            \ldelim\{{7}{*}[$\ddagger$] & MeshCNN~\cite{hanocka19meshcnn} & 85.4\%\\
            &PD-MeshNet~\cite{milano20pdmeshnet} & 85.6\% \\
            &HodgeNet~\cite{smirnov21hodgenet} & 85.0\% \\
            &DiffusionNet~\cite{sharp22diffnet} & 90.8\%\\
            &SubdivNet~\cite{hu2022subdivnet} & 90.8\% \\
            &MeshMAE~\cite{liang22meshmae} & 90.1\%\\
            &Ours     &  \textbf{91.1\%}   \\            
            \addlinespace[3pt]
            \clineB{2-3}{2}
            \addlinespace[3pt]
          \end{tabular}
            \caption{Segmentation results on the HumanBody dataset~\cite{maron17humanbody}. 
            The $\dagger$ rows are evaluations on the original meshes, and the $\ddagger$ rows are evaluations on the processed inputs.
            }
          \label{tab:seghuman}
\end{table}

\begin{table}[t]
          \centering
          \begin{tabular}{>{\centering\arraybackslash}p{0.33\textwidth}>{\centering\arraybackslash}p{0.09\textwidth}}
            \toprule
            Method     & Acc.    \\
            \midrule
            PD-MeshNet~\cite{milano20pdmeshnet} & 89.0\% \\
            MeshCNN~\cite{hanocka19meshcnn} & 94.4\% \\
            SubdivNet~\cite{hu2022subdivnet} & 97.3\% \\
            MeshMAE~\cite{liang22meshmae} & \textbf{97.9\%} \\
            \midrule
            Ours     &  \textbf{97.9\%}   \\
            \bottomrule
          \end{tabular}
\caption{Segmentation results on COSEG-aliens dataset~\cite{wang12coseg}.}
          \label{tab:segalien}
\end{table}

\begin{table}[!t]
          \centering
          \begin{tabular}{>{\centering\arraybackslash}p{0.155\textwidth}>{\centering\arraybackslash}p{0.155\textwidth}>{\centering\arraybackslash}p{0.09\textwidth}}
            \toprule
            Bij. Variants  & Geo. Selected  & Acc.   \\
            \midrule
            10 & 16 & \textbf{91.1\%} \\
            10 & 4 & 91.0\% \\
            10 & 1 & 90.9\% \\
            10 & 1* & 90.6\% \\
            3 & 16 & 90.6\% \\
            1 & 16 & 89.6\% \\
            \bottomrule
          \end{tabular}
\caption{Ablation studies on the impact of variants in mesh simplification and sampling strategy on the HumanBody dataset~\cite{maron17humanbody}. The row with * represents an unweighted geometry sampling, which gives a worse result compared to the weighted one. The first row is our default setting.}
          \label{tab:ablation1}
\end{table}

\smallskip\noindent\textbf{HumanBody.} The HumanBody dataset~\cite{maron17humanbody} contains 381 training shapes and 18 test shapes. The training set consists of four sub-datasets, SCAPE~\cite{anguelov05scape}, FAUST~\cite{bogo14faust}, MIT~\cite{vlasic08mit}, and Adobe Fuse~\shortcite{adobe2021adobe}. The test set uses shapes from SHREC07~\cite{giorgi07shrec}. Every human body in the dataset is divided into eight segments. In this experiment, we select 16 points from the 64 sampled candidate points within each topology unit. 
The results are shown in Table~\ref{tab:seghuman}. 
Examples of the segmentation results are visualized in Fig.~\ref{fig:seg_result}. 
SubdivNet~\cite{hu2022subdivnet} displays intersecting boundaries in upper body segments, and MeshMAE~\cite{liang22meshmae} exhibits an overall offset in results (especially in lower bodies), leading to decreased accuracy. Our method is better in both boundary and accuracy.

\smallskip\noindent\textbf{COSEG-aliens.} The COSEG-aliens dataset~\cite{wang12coseg} contains 200 shapes, where each mesh is divided into 4 segments. Following SubdivNet~\cite{hu2022subdivnet}, we split the training and test set in a 4:1 ratio. 
In this experiment, we set the selected point number as 16 from 64 points for each face. 
The results are shown in Table~\ref{tab:segalien}.

\subsection{Ablation Studies}
In this section, the important components in SieveNet, including the detailed settings in our two stages
and the feature descriptor, are studied.

\begin{table}[t]
    \centering
    \begin{tabular}{>{\centering\arraybackslash}p{0.25\textwidth}>{\centering\arraybackslash}p{0.17\textwidth}}
    \toprule
    Feature Descriptor & Acc.  \\
    \midrule
    Position & 88.4\% \\
    Position + Normal &  \textbf{91.1\%} \\
    \bottomrule
    \end{tabular}
    \caption{Ablation studies on the impact of the feature descriptor, based on the HumanBody dataset~\cite{maron17humanbody}}
    \label{tab:ablation2}
\end{table}
\smallskip\noindent\textbf{Variants in Topology and Geometry.}
In this section, we discuss the influence of settings in Stage 1 (Sec.~\ref{sec:32TopExtr}) and Stage 2 (Sec.~\ref{sec:33GeoSamp}).
In Stage 1, due to the randomness in simplification, for each original mesh $\mathcal{M}^L$, we extract a subdivided topology $\mathcal{M}$ with different bijections. The variants of bijections are denoted as \textit{Bij. Variants} in Table~\ref{tab:ablation1}. Increasing variants augment the small dataset, and make the network insensitive to abnormal bijections as well, which results in better accuracy.
In stage 2, the number of candidate points in each topology unit is fixed as 64. The number of selected points is denoted as \textit{Geo. Selected} in  Table~\ref{tab:ablation1}. Thanks to the distortion-aware selection, decreasing selecting numbers only results in a negligible loss in accuracy. On the other side, an unweighted uniform sampling leads to a worse result (90.9\% $\rightarrow$ 90.6\%).

\smallskip\noindent\textbf{Utilized Features.} As mentioned in Sec.~\ref{sec:33GeoSamp}, the selected points are utilized to capture features on the original mesh. A point can be completely described by its position and normal,  so we use both of them for our network. As shown in Table~\ref{tab:ablation2}, adding the normal information significantly improves the performance.

\smallskip\noindent\textbf{Point Order.} 
In our scheme, the topology units within a patch are arranged in fixed and regular order. However, the selected points within a topology unit are unordered. 
As a result, the feature representation of a topology unit varies according to the point order. We thus study whether the order of points matters. 
We use two strategies to verify the impact of point order. Firstly, we randomly permute the point order and evaluate the performance on HumanBody~\cite{maron17humanbody}. Secondly, we introduce a standalone transformer block to points (inside the topology unit) without position embedding to perform unordered feature points fusion. Interestingly, we find both of these achieve similar results with our original simple design, suggesting a robust nature of utilizing a highly regular topology of the topology unit.

\section{Conclusions}
In this paper, we propose SieveNet to learn from meshes by combining structured topology and accurate geometric features. Experiments on several classification and segmentation datasets have demonstrated the superiority of our method. In addition, SieveNet eliminates the need for hand-crafted features and can leverage the off-the-shelf transformer architecture. 
In the future, we will investigate how to assign variable numbers of points for each topology unit, based on the information density of the original geometry.

\bibliography{aaai24}

\appendix
\section{Proof}
In this section, we provide a comprehensive proof of the formulas introduced in \textit{Distortion-Aware Selection} (Sec. 3.2). Initially, we will establish an analytical proof demonstrating that $\text{PDF}(\boldsymbol{p}) = \text{PDF}(\boldsymbol{p}^L) \cdot |\det({\partial \boldsymbol{p}^L} / {\partial \boldsymbol{p}})|$. Subsequently, we will offer a geometric perspective, clarifying that $|\det({\partial \boldsymbol{p}^L} / {\partial \boldsymbol{p}})| = {d S^L}/{d S}$. Lastly, we will reach the approximate formula used in practice.

To maintain conciseness, we present the proof in the context of the two-dimensional scenario, with easy generalization to higher dimensions. We also employ slightly altered notation compared to the paper to enhance the proof's brevity. Specifically, $\text{PDF}(\boldsymbol{p}^L)$ is denoted as $f_{X, Y}$, and $\text{PDF}(\boldsymbol{p})$ is denoted as $f_{U, V}$.

\begin{theorem}
Let $X=X(U, V)\in \mathbb{R}$, $Y=Y(U, V)\in \mathbb{R}$, with joint density function $f_{X, Y}$. Assuming a bijection exists, thus $U=U(X,Y)$, $V=V(X,Y)$. For points $P^L=(X,Y)\in \mathbb{R}^2$, $P=(U,V)\in \mathbb{R}^2$. Define the joint function $\Psi:\mathbb{R}^2\rightarrow\mathbb{R}^2$ by
\begin{equation}
    \Psi(\boldsymbol{p}^L(x, y)) = (u(x,y), v(x,y)).
\end{equation}
Then the joint density function of $U$ and $V$ is given by
\begin{equation}
\label{eq1}
    f_{U, V} = f_{X, Y}(\Psi^{-1}(u, v))|J(\Psi^{-1}(u, v))|,
\end{equation}
where $J=\det({\partial \boldsymbol{p}^L} / {\partial \boldsymbol{p}})$ is the Jacobian derivative of $\Psi$ and where $\Psi^{-1}(u, v)$ is the inverse function of $\Psi(x, y)$.
\end{theorem}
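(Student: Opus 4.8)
The plan is to prove this as the classical change-of-variables formula for probability densities, derived directly from the defining property of a joint density as the integrand reproducing probabilities of measurable sets.

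First I would fix an arbitrary Borel region $A \subseteq \mathbb{R}^2$ in the $(u,v)$-plane and compute $P((U,V)\in A)$. Since $\Psi$ is a bijection with $\Psi(P^L) = P$, the event $\{(U,V)\in A\}$ is identical to $\{(X,Y)\in \Psi^{-1}(A)\}$, so
\begin{equation}
P((U,V)\in A) = P\big((X,Y)\in \Psi^{-1}(A)\big) = \iint_{\Psi^{-1}(A)} f_{X,Y}(x,y)\, dx\, dy.
\end{equation}

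Next I would apply the change-of-variables theorem for multiple integrals to the substitution $(x,y) = \Psi^{-1}(u,v)$. Assuming $\Psi$ is a $C^1$-diffeomorphism, so that the partials defining $J$ exist and $\Psi^{-1}$ is differentiable, this rewrites the integral over $\Psi^{-1}(A)$ as an integral over $A$ weighted by $|\det(\partial \boldsymbol{p}^L/\partial \boldsymbol{p})|$ evaluated at $\Psi^{-1}(u,v)$:
\begin{equation}
\iint_{\Psi^{-1}(A)} f_{X,Y}(x,y)\, dx\, dy = \iint_{A} f_{X,Y}(\Psi^{-1}(u,v))\, \big|J(\Psi^{-1}(u,v))\big|\, du\, dv.
\end{equation}

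Finally, since $P((U,V)\in A) = \iint_A f_{X,Y}(\Psi^{-1}(u,v))\,|J(\Psi^{-1}(u,v))|\,du\,dv$ holds for every measurable $A$, the integrand is almost everywhere the joint density of $(U,V)$ by uniqueness of densities, which is exactly \eqref{eq1}. I expect the only genuine subtlety to be the regularity bookkeeping: ensuring $\Psi$ is smooth enough for the Jacobian substitution to apply, and keeping straight that $J$ is the determinant of the Jacobian of $\Psi^{-1}$, i.e. $\partial \boldsymbol{p}^L/\partial \boldsymbol{p}$, with the absolute value absorbing orientation. In the mesh setting of Section 3.2 this is fine because each $\Psi_{\ell,\ell-1}$ is a piecewise-linear homeomorphism assembled from conformal flattenings, so the composite is piecewise smooth with a well-defined Jacobian off a measure-zero set; an equivalent and perhaps more economical route, which I might present instead, is to test against an arbitrary bounded continuous $g$, use $E[g(U,V)] = E[g(\Psi(X,Y))]$, and then perform the same substitution inside the expectation integral.
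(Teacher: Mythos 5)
Your proposal is correct and follows essentially the same route as the paper's proof: both reduce the claim to the Jacobian change-of-variables formula for double integrals combined with the defining property of a joint density on measurable sets (the paper uses rectangles $S=[a,b]\times[c,d]$ and verifies the candidate density reproduces $F_{U,V}$, while you derive the density forward from $P((U,V)\in A)=P((X,Y)\in\Psi^{-1}(A))$ and invoke uniqueness). Your added remarks on regularity and on $J$ being the Jacobian of $\Psi^{-1}$ are sensible bookkeeping that the paper leaves implicit.
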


\begin{proof}
For $a\leq b$ and $c\leq d$, it holds that
\begin{equation}
    F(a\leq U\leq b, c\leq V\leq d) = \int_c^d\int_a^bf_{U,V}(u, v) d u d v,
\end{equation}
where $F$ is a cumulative distribution function. Consider the two-dimensional rectangle $S=[a\ b]\times[c\ d]$. This can be rewritten as
\begin{equation}
    F(\boldsymbol{p}(u, v)\in S) = \iint_{S}f_{U,V}(u, v) d u d v.
\end{equation}
Recalling that $(x, y)=\Psi^{-1}(u, v)$, and applying the integral transformation, we have
\begin{align}
    &\iint_{S}f_{U,V}(u, v) d u d v\notag\\
    =&\iint_{S}(f_{X,Y}\Psi^{-1}(u, v)|J(\Psi^{-1}(u, v))|) d u d v\notag\\
    =&\iint_{\Psi^{-1}(S)}(f_{X,Y}(x,y)|J(x,y)|)/|J(x,y)| d x d y\notag\\
    =&\iint_{\Psi^{-1}(S)} f_{X,Y}(x,y) dxdy\notag\\
    =&F_{X,Y}((x,y)\in \Psi^{-1}(S))\notag\\
    =&F_{X,Y}(\Psi^{-1}(u, v)\in \Psi^{-1}(S))\notag\\
    =&F_{U,V}(\boldsymbol{p}(u, v)\in S),\label{eq2}
\end{align}
as required.
\end{proof}
The result can be extended to higher dimensions with the Jacobian technique.

\medskip\noindent In Eq.~(\ref{eq2}), we observe that
\begin{equation}
F_{X,Y}(\Psi^{-1}(u, v)\in \Psi^{-1}(S))
=F_{U,V}(\boldsymbol{p}(u, v)\in S),
\end{equation}
or, using clearer notation:
\begin{equation}
F_{X,Y}(\boldsymbol{p}^L(x,y)\in S^L)
=F_{U,V}(\boldsymbol{p}(u, v)\in S).
\end{equation}
By differentiating both sides, we obtain
\begin{equation}
f_{X,Y}(\boldsymbol{p}^L(x,y)) d S^L
=f_{U,V}(\boldsymbol{p}(u, v)) d S.\label{eq3}
\end{equation}
On the other hand, from Eq.~(\ref{eq1}), we know that
\begin{equation}
f_{U, V}(\boldsymbol{p}(u, v)) = f_{X, Y}(\boldsymbol{p}^L(x, y))|J(\boldsymbol{p}^L(x, y))|.\label{eq4}
\end{equation}
Comparing Eq.~(\ref{eq3}) and Eq.~(\ref{eq4}), we conclude that $|J| = {d S^L}/{d S}$. From the geometric aspect,
the absolute value of the Jacobian $|J| = |\det({\partial \boldsymbol{p}^L} / {\partial \boldsymbol{p}})|$, represents the distortion factor for area when transitioning from $u-v$ coordinates to $x-y$ coordinates. 
Hence, it is both intuitive and reasonable to substitute $|\det({\partial \boldsymbol{p}^L} / {\partial \boldsymbol{p}})|$ with ${d S^L}/{d S}$.

In practice, we utilize finite areas denoted as $\Delta S^L$ and $\Delta S$, associated with Voronoi cells, as heuristic surrogates for the ratio of area elements. A Voronoi cell consists of all points that are closer to its central vertex $v$ than to any other, as illustrated in Fig.~\ref{fig1}. Notably, Voronoi areas offer a reliable approximation for area elements~\cite{meyer2003discrete}.
\begin{figure}
    \centering
    \includegraphics[width=0.4\linewidth]{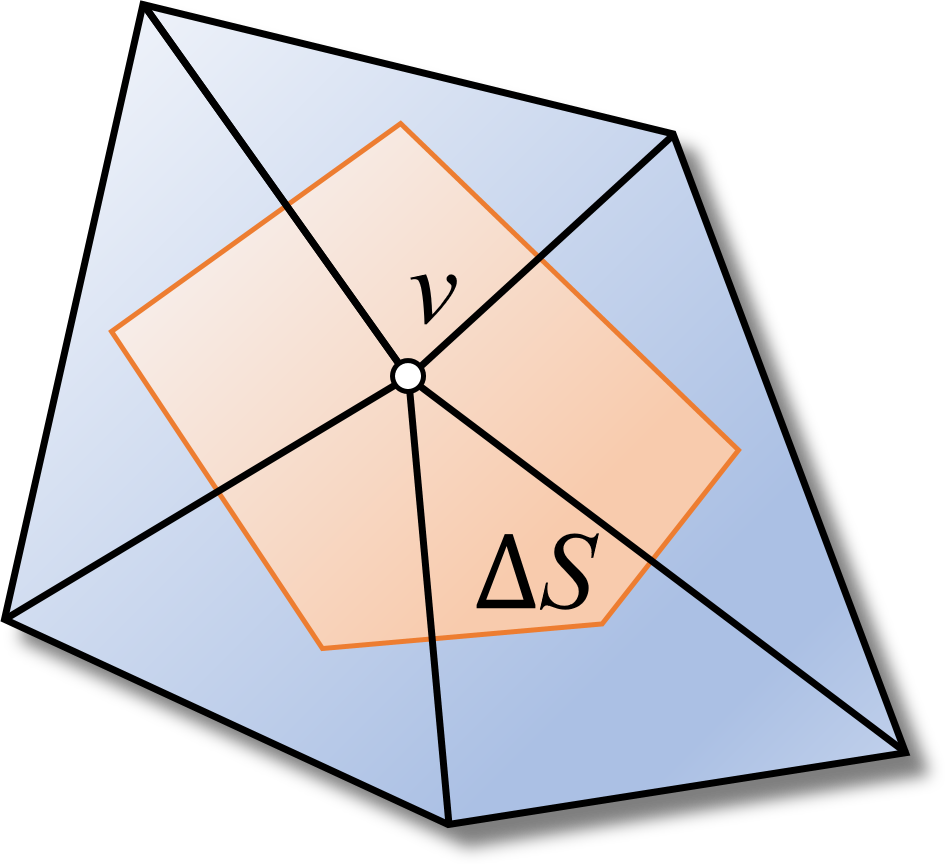}
    \caption{The finite area $\Delta S$ on a triangulated one-ring surface using Voronoi cells.}
    \label{fig1}
\end{figure}
As the Voronoi area lacks a consistent definition for points sampled on triangle faces, we interpolate the Voronoi area of triangle vertices on the candidate points using barycentric coordinates, yielding the following relation:
\begin{equation}
    \frac{d S^L}{d S} \approx 
    \frac{\Delta S^L}{\Delta S} \approx 
    \frac{{\boldsymbol{p}^L_{bary}}\cdot(\Delta S^L_1, \Delta S^L_2, \Delta S^L_3)}{{\boldsymbol{p}_{bary}}\cdot(\Delta S_1, \Delta S_2, \Delta S_3)},
\end{equation}
which is used as the weight of distortion-awared selection in our paper.

\section{Discussions on Topology Extraction}
\subsection{Comparison with Remeshing}
The crucial difference between our method and the remeshing technique lies in our treatment of the interplay between mesh topology and geometry. The conventional concept of remeshing primarily aims to produce a new mesh with limited geometric errors and a desired regular topology. However, this inadvertently restricts the geometric representation by enforcing strict adherence to the vertex-face structure. This restriction could potentially lead to a bottleneck in the transmission of geometric information.

In contrast, our method embraces a more flexible perspective. We relax the constraint mentioned earlier, recognizing that the primary emphasis lies in achieving a regular topology, rather than rigidly following the vertex-face structure. This conscious relaxation of the requirement empowers us to exploit a wider array of information from the original mesh. To implement this concept, we employ an approach that directly combines the attributes of points on the original mesh, thereby facilitating the integration of features into the topology units. It's important to emphasize that our approach has the capacity to accommodate more intricate features, a direction we intend to explore in our future work.

In summary, the fundamental distinction between our proposed method and the conventional remeshing technique rests in our adaptable approach to mesh topology and geometry interaction. By prioritizing a regular topology within topology units over strict adherence to the conventional vertex-face structure, we not only achieve the goal of geometric representation but also hold the potential to incorporate more intricate features in future advancements of our approach.

\subsection{Alternative Implementations}
We explored implementing our topology extraction algorithm adapted from different remeshing algorithms. However, the available implementations differ significantly in computational efficiency. We first instantiated our algorithm from MAPS~\cite{lee98maps}, while the pure Python implementation within MAPS is inefficient. Due to resource limitations on our CPUs, we turn to a C++-based remeshing implementation~\cite{liu20neusubdiv} for all of our experiments.


\section{Experiements}
\subsection{Computing Infrastructure}
For all of our experiments, the network is trained on 2 NVIDIA TITAN Xp GPUs, with an Intel(R) Core(TM) i7-6700k @ 4.0GHz CPU.

\subsection{Further Ablation Study Details}
To discuss whether the unordered point-based features inside a topology unit affect the performance of our network, we explore either randomly permuting the input inside topology units or using an extra transformer block to extract permutation-invariant features. 

For the random permutation approach, we employ the numpy library to shuffle the order in the training phase, independently shuffling the point order within each topology unit. Alternatively, in the transformer approach, we attempt a shared transformer block, where a conventional transformer block is deployed to embed the ordered features in each topology unit into a point-order invariant one. The results are shown in Table~\ref{tab1}. The default method achieves the best accuracy, indicating the effectiveness and robustness of our method.

\begin{table}[H]
    \centering
    \begin{tabular}{>{\centering\arraybackslash}p{0.25\textwidth}>{\centering\arraybackslash}p{0.17\textwidth}}
    \toprule
    Input Processing & Acc.  \\
    \midrule
    N/A & \textbf{91.1\%} \\
    Permutation & 90.8\%\\
    Transformer & 91.0\%\\
    \bottomrule
    \end{tabular}
    \caption{Impact of point order on the segmentation performance on the HumanBody dataset~\cite{maron17humanbody}.}
    \label{tab1}
\end{table}

\end{document}